\newtheorem{example}{Example}
\newtheorem{theorem}{Theorem}
\newtheorem{proposition}[theorem]{Proposition}
\newtheorem{definition}{Definition}
\title{Inferring Reward Machines and Transition Machines from \\Partially Observable Markov Decision Processes}
\author{
    Yuly Wu\textsuperscript{\rm 1},
    Jiamou Liu\textsuperscript{\rm 1}\thanks{Corresponding author.},
    Libo Zhang\textsuperscript{\rm 1}
}
\begin{document}

\newacronym{RL}{RL}{Reinforcement Learning}
\newacronym{MDP}{MDP}{Markov Decision Process}
\newacronym{MDP-RM}{MDP-RM}{Markov Decision Process with Reward Machine} 
\newacronym{POMDP}{POMDP}{Partially Observable Markov Decision Process} 
\newacronym{Det-PONMRDP}{Det-PONMRDP}{Deterministic Partially Observable Non-Markovian Reward Decision Process} 
\newacronym{TA}{TA}{Task Automaton} 
\newacronym{HMM}{HMM}{Hidden Markov Model} 
\newacronym{DFA}{DFA}{Deterministic Finite Automaton} 
\newacronym{ILP}{ILP}{Integer Linear Programming} 
\newacronym{NMRDP}{NMRDP}{Non-Markovian Reward Decision Process} 
\newacronym{ARMDP}{ARMDP}{Abstract Reward Markov Decision Process} 
\newacronym{RM}{RM}{Reward Machine} 
\newacronym{TM}{TM}{Transition Machine} 
\newacronym{AP}{AP}{the set of Atomic Propositions} 
\newacronym{NFA}{NFA}{Nondeterministic Finite Automaton} 
\newacronym{DetPOMDP}{Det-POMDP}{Deterministic Partially Observable Markov Decision Process}
\newacronym{DBMM}{DBMM}{Dual-Behavior Mealy Machin}

\maketitle

\begin{abstract}
Partially Observable Markov Decision Processes (POMDPs) are fundamental to many real-world applications. Although reinforcement learning (RL) has shown success in fully observable domains, learning policies from traces in partially observable environments remains challenging due to non-Markovian observations. Inferring an automaton to handle the non-Markovianity is a proven effective approach, but faces two limitations: 1) existing automaton representations focus only on reward-based non-Markovianity, leading to unnatural problem formulations; 2) inference algorithms face enormous computational costs.
For the first limitation, we introduce Transition Machines (TMs) to complement existing Reward Machines (RMs). To develop a unified inference algorithm for both automata types, we propose the Dual Behavior Mealy Machine (DBMM) that subsumes both TMs and RMs. We then introduce DB-RPNI, a passive automata learning algorithm that efficiently infers DBMMs while avoiding the costly reductions required by prior work. We further develop optimization techniques and identify sufficient conditions for inferring the minimal correct automata. Experimentally, our inference method achieves speedups of up to three orders of magnitude over SOTA baselines.
\end{abstract}

\section{Introduction}
Partially Observable Markov Decision Processes (POMDPs) form a foundational framework in reinforcement learning (RL) for decision-making under uncertainty, where agents must make decisions without complete state information. The setting arises in numerous real-world domains, from autonomous navigation to medical diagnosis with incomplete patient data \cite{kaelbling1998planning,cassandra1998survey}. The core challenge lies in the breakdown of the Markov property: identical observations may require different optimal actions depending on the history of interactions. To address this, {\em reward machines (RM)} \cite{icarte2018using} have emerged as a promising approach, serving as external memory structures that capture temporal dependencies in reward functions and restore Markovian properties to enable the application of standard RL algorithms \cite{toro2019learning}. However, the effectiveness of this approach hinges on constructing RMs that accurately capture the underlying temporal structure of the environment.

Inferring an appropriate RM from observed experience traces presents a central challenge. Although one might intuitively seek a minimal automaton, as it most concisely expresses the non-Markovianity of the environment, \cite{toro2019learning} demonstrated that minimal RMs are not necessarily effective representations in POMDPs. Consequently, inferring a RM is formulated as an optimization problem. Moreover, current approaches for inferring such automata in partially observable domains \cite{abate2023learning, hyde2024detecting,toro2019learning} suffer from high computational complexity, limiting their practicality.

Our key insight is that POMDPs exhibit two distinct types of non-Markovian behavior: {\em transition dependencies}, where the next unobserved state depends on hidden historical events; and {\em reward dependencies}, where reward signals depend on past context. To avoid formulating the problem as a complex optimization task, we introduce {\em Transition Machines (TM)} to explicitly model transition dependencies analogously to how RMs capture reward dependencies. The framework naturally decouples the learning problem into inferring minimal TMs and minimal RMs.

To develop a unified inference algorithm for both automata, we introduce the Dual Behavior Mealy Machine (DBMM), a unified representation framework that subsumes both TMs and RMs under a single formalism. Then, to bypass the costly optimization of prior work, we design a direct inference algorithm to infer DBMMs\cite{oncina1992inferring}. We further develop specialized preprocessing and an efficient method, improving efficiency while maintaining theoretical correctness guarantees.

Our experimental evaluation on a range of challenging POMDP environments demonstrates the superiority of our approach. Compared to SOTA baselines~\cite{abate2023learning,hyde2024detecting}, our method achieves speedups of up to three orders of magnitude. Ablation studies validate the contribution of each component: our observation supplement techniques significantly reduce the complexity of the learned automata, while our reduction rules provide substantial computational savings. Finally, the agent augmented with our inferred automata converges to the optimal policy in a 25x25 grid-world partially observable environment.

\section{Related Work}
To address non-Markovian challenges in RL, automaton-based methods have become a prominent approach, theoretically equivalent to linear temporal logic~\cite{vardi1986automata}. Within the automaton-based framework, different works have employed various automaton representations as memory structures. Early approaches utilized Deterministic Finite Automata (DFAs) to capture task dependencies~\cite{neider2021advice,abate2023learning}. Some works~\cite{rens2020online,xu2021active,rens2020learning} have proposed simplified variants such as mealy reward machines that restrict reward machines to Mealy machines. The most prominent approach has been using Reward Machines, which provide explicit encoding of non-Markovian reward structures~\cite{hyde2024detecting,toro2019learning}.

A critical challenge lies in how these automata are constructed. This problem is typically divided into two settings: active learning, where an agent can query an oracle or interact with the environment~\cite{abate2023learning}, and passive learning, where the automaton must be inferred solely from a fixed set of pre-collected traces~\cite{abate2023learning,hyde2024detecting}. While the passive learning setting is often more challenging and thus less frequently studied than active learning, it is of greater practical significance due to its applicability in scenarios with limited interaction.

Several passive learning approaches have been proposed to infer automata from traces. Toro et al.~\cite{toro2019learning} introduced a local search-based method for inferring reward machines in POMDPs, identifying that minimal reward machines are not necessarily optimal in POMDPs and formulating the inference problem as an optimization task. Abate and Brafman~\cite{abate2023learning} proposed an HMM-based approach that learns a DFA as reward representation by reducing the automaton inference problem to hidden Markov model parameter estimation. Hyde et al.~\cite{hyde2024detecting} formulates reward machine learning as an Integer Linear Programming (ILP) problem, encoding structural constraints and data consistency requirements as constraints. These methods all operate by reducing the automaton inference problem to a more computationally demanding, mathematical formalism. Gaon and Brafman~\cite{gaon2020reinforcement} pioneered the integration of classical passive automaton learning techniques from the grammatical inference field into RL. However, the work was limited to inferring DFAs, and was thus incapable of modeling more expressive structures. Other works such as Cipollone et al.~\cite{cipollone2023provably} and Deb et al.~\cite{deb2024tractable} also infer automata from traces, but differ in their goal, primarily using PDFAs to reconstruct a full environmental model instead of for state augmentation.
 
\begin{figure}[htbp]
    \centering
    \includegraphics[width=\columnwidth]{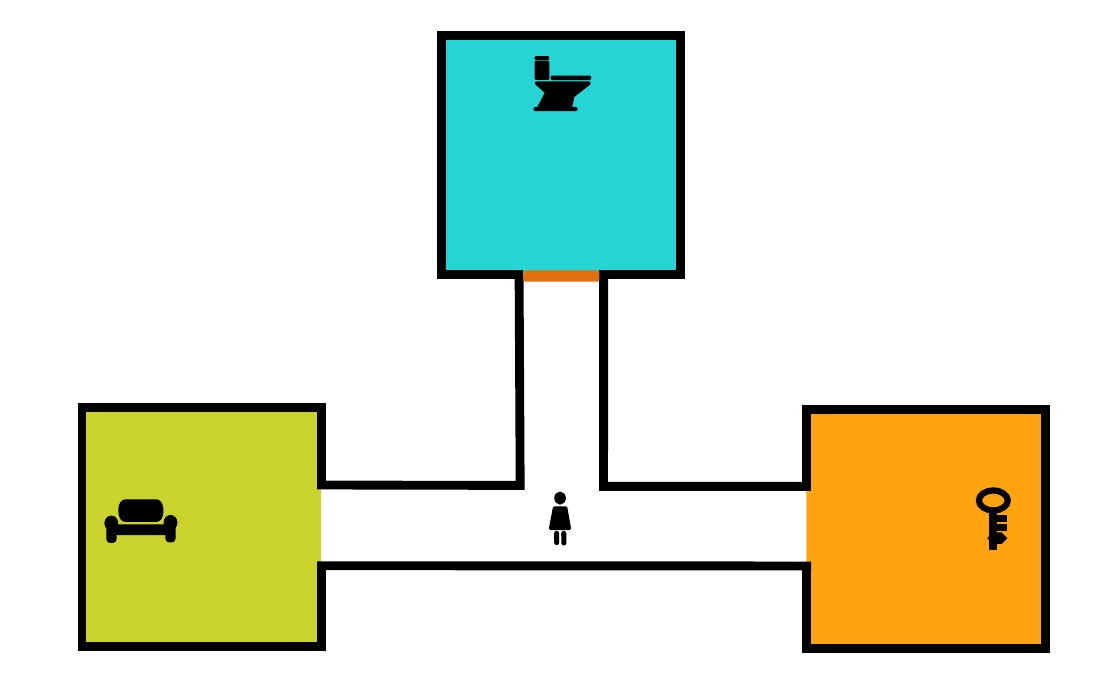} 
    \caption{A labelled Det-POMDP environment.}
    \label{fig:fig1}
\end{figure}

\section{Preliminaries and Problem Formulation}
\subsection{Deterministic Partially Observable Markov Decision Process}
In RL, an agent learns a policy by interacting with its environment. Often, the agent is assumed to fully observe the environment state, which satisfies the Markov property, meaning the future is independent of the past given the present state. However, in many realistic settings, the agent has only partial observability of the state, receiving observations that provide incomplete information.

Figure~\ref{fig:fig1} shows a partially observable environment with four areas. The agent's perception is limited to its current area, and its action space consists of four movement actions and a sit action. From its raw observation (the current area), the agent can recognize symbolic labels representing high-level, task-relevant events, such as ``key acquired``. Labels are determined by the objects within each area. The task requires a specific sequence of actions: the agent must first visit the $orangeroom$ to obtain the $key$, which is needed to enter the locked $cyanroom$. After reaching the $toilet$, it must then travel to $limegreenroom$ and $sit$ on the $sofa$ to receive a reward. The core challenge is that the agent is memoryless: based only on its current area, it cannot know if it has acquired the $key$ or visited the $toilet$. If it $sit$s on the $sofa$ prematurely, the episode ends with no reward.

This kind of environment is formalized as a Deterministic Partially Observable Markov Decision Process (Det-POMDP)~\cite{yu2024model}. A Det-POMDP is defined as a tuple $\mathcal{P} = \langle S, A, s_0, P, R, \gamma, O, Z \rangle$, where $S$, $A$, and $O$ are the finite sets of states, actions, and observations; $s_0 \in S$ is the initial state; $P: S \times A \rightarrow S$ is the deterministic transition function that maps each state-action pair to a unique next state; $R: S \times A \rightarrow \mathbb{R}$ is the reward function that assigns a real-valued reward to each state-action pair; $\gamma \in [0,1]$ is the discount factor and $Z: S \rightarrow O$ is the observation function that maps each state to the observation the agent receives in that state. Finally, AP is a set of atomic propositions for events, and $L: O \rightarrow 2^{\text{AP}}$ is a labeling function mapping observations to these events.

In a Det-POMDP, the environment begins in the initial state $s_0$. At each timestep, given the current state $s$, the agent observes $o = Z(s)$ and $l = L(o)$, then selects an action $a \in A$. Upon taking action $a$, the agent receives reward $r = R(s, a)$, and the environment transitions to $s' = P(s, a)$. Since multiple states can map to the same observation through $Z$, the agent cannot distinguish between different underlying states that produce identical observations. This partial observability means that the observation the agent perceives may not satisfy the Markov property—the same observation-action pair can lead to different future observation and reward depending on the hidden state.

The environment in Figure~\ref{fig:fig1} can be modeled by the Det-POMDP in which states $S = O \times \{0,1\} \times \{0,1\}$, where $O$ is the four areas and the binary values indicate whether the $key$ has been acquired and whether the $toilet$ has been visited. The AP is $\{toilet$, $sofa$, $key$, $None\}$. The non-Markovian property of observations becomes evident when the agent is at the $corridor$: taking action $up$ leads to $cyanroom$ if the $key$ has been acquired, but remains at $corridor$ otherwise. Since both states $(corridor, 0, *)$ and $(corridor, 1, *)$ produce the same observation $corridor$, the observation-action pair $(corridor, up)$ does not uniquely determine the next observation. The agent receives a reward of 1 when it operates $sit$ in the state $(limegreenroom, *, 1)$, and 0 otherwise.

In partially observable environments, the agent's reward often depends not just on the current observation and action, but on the entire observation-action history. For instance, in our example, the agent receives reward only when it sits on $sofa$ after reaching $toilet$ - the reward function inherently depends on the temporal ordering of past observations.

\subsection{Problem Formulation}
In Det-POMDPs, the agent must select actions based solely on observable information. A policy $\pi: (O \times A)^* \times O \rightarrow A$ maps observation-action histories to actions, where $\pi(o_0, a_0, \ldots, o_t)$ specifies the action to take after observing the sequence $o_0, a_0, \ldots, o_t$.

Given a set of traces $\mathcal{D} = \{\tau_1, \tau_2, \ldots, \tau_n\}$ collected from a Det-POMDP, where each trace $\tau_i$ consists of observation-action-reward sequences, our goal is to learn an optimal policy $\pi^*$ that maximizes the expected discounted reward $\mathbb{E}\left[\sum_t \gamma^t r_t\right]$. The challenge is that standard RL algorithms cannot be directly applied due to the non-Markovian property of rewards and transitions with respect to observations.

\subsection{Reward Machine}
Reward Machines (RMs) provide a principled approach to address this challenge by maintaining sufficient memory to transform non-Markovian reward functions into Markovian.

\begin{definition}[Reward Machine]
\label{def:rm}
Given a set of atomic propositions $\text{AP}$, observations $O$, and actions $A$, a \acrfull{RM} is a tuple $\mathcal{RM} = \langle U, u_0, \delta_U, \delta_R \rangle$, where:
\begin{itemize}
    \item $U$ is a finite set of RM states,
    \item $u_0 \in U$ is the initial RM state,
    \item $\delta_U: U \times 2^{\text{AP}} \rightarrow U$ is the RM state-transition function,
    \item $\delta_R: U \times O \times A \rightarrow \mathbb{R}$ is the reward-output function.
\end{itemize}
\end{definition}

A RM is a finite-state automaton that processes sequences of labels to determine appropriate reward function. 
The key insight of an RM is that its reward-output function, $\delta_R$, is parameterized by the RM state $u$. This allows the same observation-action pair to yield different rewards depending on the history encoded in $u$, converting history-dependent rewards into a Markovian framework.


In practice, policy learning with RMs is a two-stage process: first, an RM is inferred or otherwise constructed, and second, it's used to create an augmented, Markovian state $(o, u)$ for a standard RL agent. The inferred RM must correctly predict rewards for all feasible histories to ensure the augmented state space truly restores the Markov property and enables learning of optimal policies.

\section{Method}
\subsection{Transition Machine}
To enable policy learning in Det-POMDPs, we need to infer a RM that provides sufficient memory to restore the Markov property. A natural goal to inferring RMs from traces is to seek the minimal RM that correctly explains the observed reward patterns. For our example environment, such a minimal RM would contain only two states, as shown in Figure~\ref{fig:fig2}: one representing ``not reached $toilet$`` and the other representing ``reached $toilet$.`` This RM correctly predicts that rewards are obtained when the agent sits down on $sofa$ after reaching $toilet$, and zero rewards otherwise.

\begin{figure}[htbp]
    \centering 

    \begin{subfigure}[b]{0.4\columnwidth}
        \centering
        \includegraphics[width=\textwidth]{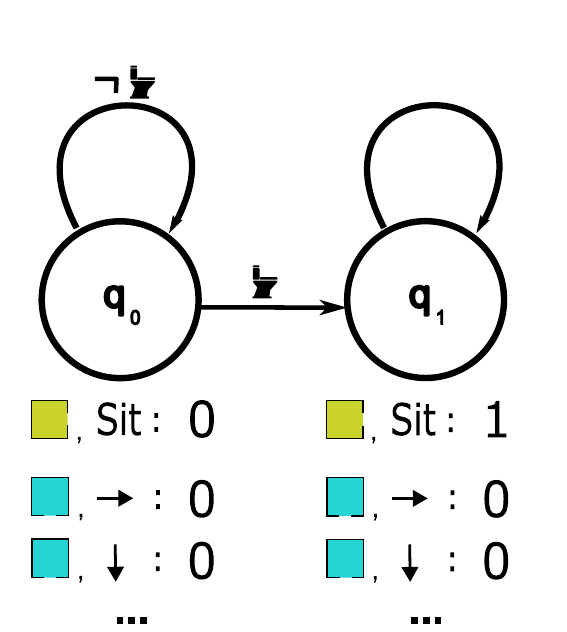}
        \caption{minimal RM}
        \label{fig:fig2}
    \end{subfigure}
    \hfill 
    \begin{subfigure}[b]{0.56\columnwidth}
        \centering
        \includegraphics[width=\textwidth]{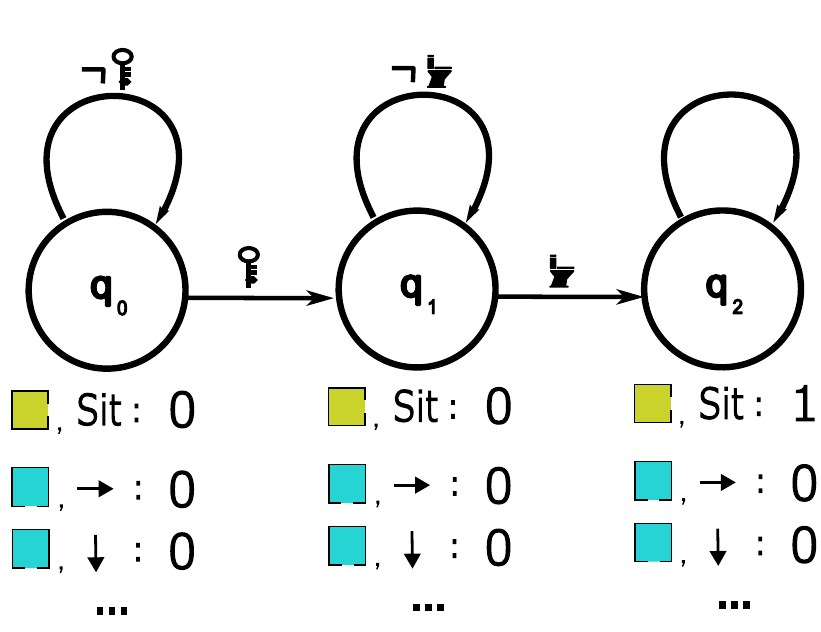}
        \caption{``optimal'' RM}
        \label{fig:fig3}
    \end{subfigure}

    \caption{Two possible RMs for the environment.}
\end{figure}

However, this minimal RM, while correctly capturing the reward dependencies, fails to provide sufficient memory for policy learning. The RM does not distinguish between states where the agent has or has not acquired $key$, yet this distinction is crucial for determining valid actions. For instance, the agent cannot enter $cyanroom$ without first obtaining $key$, but the minimal RM provides no memory of this constraint. A better RM is shown in Figure~\ref{fig:fig3}.

This limitation arises because the minimal RM focuses exclusively on reward prediction accuracy. In our example, the agent cannot enter $cyanroom$ without $key$, yet this transition constraint is completely absent from the minimal RM's representation because it is not about reward.

This problem reflects a fundamental characteristic of POMDPs from the agent's perspective. The non-Markovian behavior an agent experiences arises from two distinct sources: a reward dependency, where rewards are based on the sequence of observations and actions, and a transition dependency, where next observations are based on the same history. RMs address only the first type of dependency, leaving the second type unresolved.

As RMs maintain memory to capture non-Markovian reward dependencies into Markovian, we use an analogous mechanism for non-Markovian transition dependencies.

\begin{definition}[Transition Machine]
A Transition Machine (TM) is a tuple $\mathcal{TM} = \langle Q, q_0, \delta_Q, \delta_P \rangle$, where:
\begin{itemize}
    \item $Q$ is a finite set of TM states,
    \item $q_0 \in Q$ is the initial TM state,
    \item $\delta_Q: Q \times 2^{\text{AP}} \rightarrow Q$ is the TM state-transition function,
    \item $\delta_P: Q \times O \times A \rightarrow O$ is the transition-output function.
\end{itemize}
\end{definition}

\begin{figure}[htbp]
    \centering
    \includegraphics[width=0.4\columnwidth]{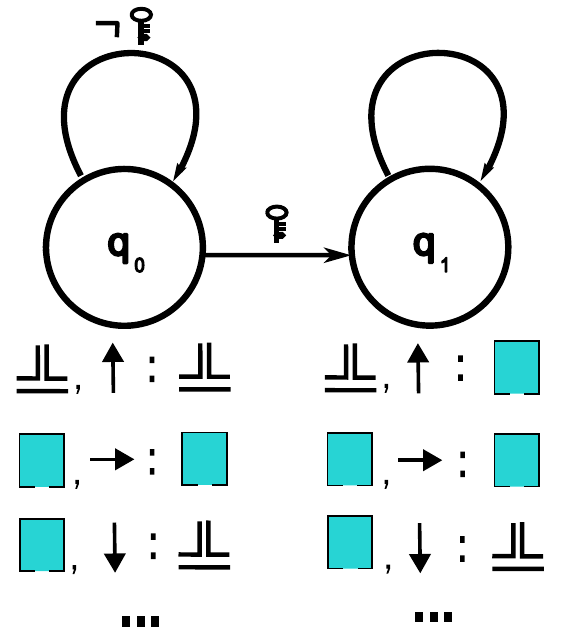} 
    \caption{The least TM for the environment.}
    \label{fig:your_label}
\end{figure}

A TM operates as a finite-state automaton that processes observation-action sequences to predict observation transitions. Starting from the initial state $q_0$, the TM transitions between states based on labels, similar to RMs. The key difference lies in the output function $\delta_P$, which predicts the next observation, rather than producing rewards. The output function $\delta_P: Q \times O \times A \rightarrow O$ is state-dependent, allowing different predictions for the same observation-action pair based on the current state $q$. This structure enables the TM to maintain memory of transition-relevant history, converting non-Markovian transition dependencies into a Markovian one where the current TM state encapsulates the necessary past information for accurate transition prediction.

\subsection{Method Framework}

Our framework addresses policy learning in Det-POMDPs through a two-stage approach: first inferring a TM and a RM that provide sufficient memory, then applying standard RL algorithms on the resulting Markovian environments.


We assume a dataset $D = \{\zeta_1, \zeta_2, \ldots, \zeta_n\}$ of traces collected from an unknown Det-POMDP. Each trace is a sequence of observation-action-reward steps, which produces a labeled trace $\zeta = \langle (l_0, o_0, a_0, r_0), \ldots, (l_n, o_n, a_n, r_n) \rangle$, where $l_t = L(o_t)$ represents symbolic labels extracted from observations $o_t$ using the given labeling function $L$.

Our approach infers both TMs and RMs from this dataset $D$. They work together to transform the Det-POMDP into an MDP. Once inferred, the augmented state space $(o, u, q)$ becomes Markovian, where $o$ is the current observation, $u$ is the RM state, and $q$ is the TM state.

To ensure the inferred automata enable policy learning, we require them to be resolvent—that is, they must accurately predict the environment's behavior.

\begin{definition}[Resolvent Machine]
Let $M$ be a Det-POMDP $\langle S, s_0, A, T, R, Z, O, \gamma \rangle$ with labeling $L$. For any automaton with transition function $\delta$, we denote by $\delta^*$ its extension to sequences: $\delta^*(q, \epsilon) = q$ and $\delta^*(q, w\cdot\sigma) = \delta(\delta^*(q, w), \sigma)$.

(1) An RM $\mathcal{RM} = \langle U, u_0, \delta_U, \delta_R \rangle$ is \textbf{resolvent} to $M$ if for every feasible history $h = (o_0, a_0, o_1, a_1, \ldots, o_t)$ in $M$, letting $u = \delta_U^*(u_0, \bar{l})$ where $\bar{l} = (L(o_1), \ldots, L(o_t))$: $$\forall a \in A: \delta_R(u, o_t, a) = R(s_t, a)$$

(2) A TM $\mathcal{TM} = \langle Q, q_0, \delta_Q, \delta_T \rangle$ is \textbf{resolvent} to $M$ if for every feasible history $h$, letting $q = \delta_Q^*(q_0, \bar{l})$: $$\forall a \in A: \delta_T(q, o_t, a) = Z(T(s_t, a))$$

where $s_t$ is the underlying state corresponding to $h$.
\end{definition}

Among all resolvent machines, we seek the minimal ones to ensure efficient computation during policy learning.

\begin{definition}[Minimal Resolvent Machine]
A TM or RM is \textbf{minimal resolvent} for a Det-POMDP if it is resolvent and there exists no resolvent TM or RM that has fewer states.
\end{definition}

Our dual automata framework naturally decomposes the inference task into finding minimal resolvent TMs and RMs. This avoids the complex formulation required when forcing all dependencies into a single RM structure.

Specifically, our method proceeds by inferring the minimal resolvent TM and minimal resolvent RM from $D$. These automata, once inferred, provide the augmented state representation necessary for applying standard RL algorithms to learn optimal policies in the original Det-POMDP.

\subsection{Dual Behavior Mealy Machine}
To address the first step of the framework, a straightforward approach would be to develop separate inference algorithms for TMs and RMs. However, since these two types of automata have similar structures, they can be modeled within a unified framework and inferred using a single algorithm.

To achieve this unified representation, we introduce the Dual Behavior Mealy Machine (DBMM), an automaton framework that subsumes both TMs and RMs.

\begin{definition}[Dual-Behavior Mealy Machine]
\label{def:dbmm}
A Dual-Behavior Mealy Machine (DBMM) is defined as a tuple $\mathcal{N} = \langle V, v_0, \mathcal{I_\alpha}, \mathcal{I_\beta}, \mathcal{O}, \mathcal{T}, \mathcal{G} \rangle$, where:
\begin{itemize}
    \item $V$ is a finite set of states, $v_0 \in V$ is the initial state,
    \item $\mathcal{I_\alpha}$ is the \textit{alpha input alphabet}. Inputs $i_\alpha \in \mathcal{I_\alpha}$ trigger outputs but do not cause state transitions.
    \item $\mathcal{I_\beta}$ is the \textit{beta input alphabet}. Inputs $i_\beta \in \mathcal{I_\beta}$ cause state transitions but do not trigger outputs. ($\mathcal{I_\alpha} \cap \mathcal{I_\beta} = \emptyset$)
    \item $\mathcal{O}$ is the output alphabet,
    \item $\mathcal{T}: V \times \mathcal{I_\beta} \rightarrow V$ is the state transition function.
    \item $\mathcal{G}: V \times \mathcal{I_\alpha} \rightarrow \mathcal{O}$ is the output function.
\end{itemize}
\end{definition}

A DBMM can represent a RM by setting $\mathcal{I_\beta} = 2^{AP}$, $\mathcal{I_\alpha} = O \times A$, and $\mathcal{O} = \mathbb{R}$. Similarly, it can represent a TM with the same input alphabets but $\mathcal{O} = O$. Consequently, this unified framework enables inferring both types of machines using a single algorithmic approach.

Beyond its unified representation, the DBMM framework offers significant computational advantages. Current methods for inferring RMs often rely on computationally expensive problem reductions. In contrast, the DBMM's structure is designed to be compatible with passive automaton learning algorithms, which build the automaton directly from trace data. This approach avoids the costly overhead of the reduction-based methods used in prior work.

\subsubsection{Problem Reduction to DBMM inference.}
With DBMM defined as a unified representation framework, we now address the automata inference step of our approach for policy learning. We reduce the problem to inferring two DBMMs: one representing the TM and another representing the RM.

This reduction involves transforming trace data into sample sets suitable for passive automaton learning. In passive automaton learning, we are given a sample set consisting of input-output sequence pairs, where each pair represents the behavior of an unknown automaton: given an input sequence, the automaton produces the corresponding output sequence. The inference task is to find a minimal automaton that can reproduce all the observed input-output behaviors.

For our problem, each labeled trace $\zeta = \langle (l_0, o_0, a_0, r_0), \ldots, (l_n, o_n, a_n, r_n) \rangle$ is converted into two distinct input-output sequence pairs:

\begin{description}
    \item[TM Sample:] Input sequence $\langle (o_0, a_0), l_0, (o_1, a_1), l_1, \ldots \rangle$ with output sequence $\langle o_1, \beta_{\mathrm{default}}, o_2, \beta_{\mathrm{default}}, \ldots \rangle$
    \item[RM Sample:] Input sequence $\langle (o_0, a_0), l_0, (o_1, a_1), l_1, \ldots \rangle$ with output sequence $\langle r_0, \beta_{\mathrm{default}}, r_1, \beta_{\mathrm{default}}, \ldots \rangle$
\end{description}

Here, observation-action pairs and labels alternate in the input sequence, while $\beta_{\mathrm{default}}$ serves as a placeholder output for label inputs, as they only trigger state transitions.

By inferring minimal DBMMs from the TM and RM sample sets respectively, we obtain DBMM representations of both the TM and RM, thus solving our inference problem and providing the memory structure needed for the subsequent policy learning stage.

\section{Algorithm for Inferring DBMMs}
\label{sec:method}

We now present our algorithm for learning DBMMs from trace data, which accomplishes the first stage of our policy learning approach by inferring the automata needed to transform the Det-POMDP into an MDP.

\renewcommand{\algorithmicrequire}{\textbf{Input:}}
\renewcommand{\algorithmicensure}{\textbf{Output:}}

\begin{algorithm}[H]
\caption{Overall Inferring Algorithm}
\begin{algorithmic}[1]
\REQUIRE TMSampleSet, RMSampleSet
\ENSURE TM, RM

\STATE TMSampleSet $\leftarrow$ ReductionRule(TMSampleSet)
\STATE TM $\leftarrow$ DB-RPNI(TMSampleSet)
\STATE RMSampleSet $\leftarrow$ StateSupp(TM, RMSampleSet)
\STATE RMSampleSet $\leftarrow$ ReductionRule(RMSampleSet)
\STATE RM $\leftarrow$ DB-RPNI(RMSampleSet)
\end{algorithmic}
\end{algorithm}

Our algorithm comprises three key stages: sample sets preprocessing, the DB-RPNI algorithm that learns minimal automata, and an observation supplement technique that leverages the inferred TM to improve RM inference.

\subsection{Preprocessing}

Sample sets often contain patterns irrelevant to state identification, complicating the inference process. We introduce two preprocessing methods to simplify the sample sets while preserving information for recovering the correct automata. These steps are applied before the inference algorithm.

\subsubsection{Redundant $\alpha$-Input Removal.}

An $\alpha$-input $i_\alpha \in \mathcal{I_\alpha}$ is redundant if it always produces the same output regardless of the automaton's state, providing no information for distinguishing states. We remove these pairs before inference and re-integrate their constant output into the final automaton. The detailed procedure is provided in Appendix~\ref{app:preprocessing}.

\subsubsection{Trivial $\beta$-Input Removal.}

In many RL environments, certain $\beta$-inputs (e.g., ``no event``) are known a priori to cause no state transitions. We call such inputs trivial. We remove these inputs from the sample data and add them back as self-loops on all states of the inferred automaton. The detailed procedure can also be found in Appendix~\ref{app:preprocessing}.

\subsection{Dual Behavior RPNI (DB-RPNI)}

We adapt the RPNI algorithm~\cite{oncina1992inferring} to infer DBMMs. Our DB-RPNI algorithm follows RPNI's two-phase structure, but incorporates specialized handling for the DBMM's dual-input structure.


\subsubsection{Phase 1: Prefix Tree Construction.}

The prefix tree transducer (PTT) is an automaton that represents all samples exactly without generalization. Each unique $\beta$-input sequence corresponds to a distinct state. The detailed pseudocode for building the PTT is provided in Appendix~\ref{app:dbrpni_pseudo}.

\subsubsection{Phase 2: State Merging.}

State merging uses the red-blue framework~\cite{von2025extending}. Red states form the core of the automaton, while blue states are merge candidates. The algorithm iteratively attempts to merge each blue state with a compatible red state. If a merge succeeds, the automaton is updated; if not, the blue state is promoted to red. This process continues until no blue states remain. The complete pseudocode is available in Appendix~\ref{app:dbrpni_pseudo}.

\subsubsection{Mergeability Criteria and Merge Operation.}

Two states can be merged if they are locally compatible and all pairs of implied merges are also locally compatible.

\begin{definition}[Local Compatibility]
Two states $u, v \in \mathcal{V}$ are locally compatible, denoted $\mathrm{Compatible}(u, v)$, iff:
\[
\mathrm{Compatible}(u, v) \iff \forall i_\alpha \in I_u \cap I_v : \mathcal{G}(u, i_\alpha) = \mathcal{G}(v, i_\alpha)
\]
where $I_u = \{i_\alpha \in I_\alpha \mid \mathcal{G}(u, i_\alpha) \text{ is defined}\}$ represents the set of $\alpha$-inputs for which state $u$ has recorded outputs.
\end{definition}

Implied merging is the transitive closure of merges that propagates throughout the automaton~\cite{von2025extending}. 
Implied merging can be handled using the folding operation~\cite{de2010grammatical}, which computes all state pairs that must be merged transitively and constructs the resulting automaton. The folding operation enables both the merge operation and mergeability checking, which it verifies by testing the local compatibility of all implied pairs.

\subsubsection{Theoretical Guarantees.}
Our algorithm inherits the theoretical guarantees of RPNI. When the input sample set satisfies the \emph{structure completeness} condition (a formal definition is provided in Appendix~\ref{app:def_structure_completeness}), our algorithm is guaranteed to infer the minimal resolvent automaton.

\begin{theorem}[Correctness, Minimality and Complexity]
\label{thm:correctness}
Let $T$ denote the number of states in the PTT, $|U|$ the number of states in the target automaton, $|L| = |\mathcal{I_\beta}|$, and $F = \max_{q \in \mathcal{V}} |\{i_\alpha : \mathcal{G}(q, i_\alpha) \text{ defined}\}|$. Then:
\begin{enumerate}
    \item If the sample set $\mathcal{S}$ is structure complete, DB-RPNI returns the minimal resolvent DBMM.
    \item With structure complete $\mathcal{S}$, DB-RPNI runs in $O(|U| \cdot |L| \cdot T \cdot F)$ time.
    \item In the general case, DB-RPNI runs in $O(T^3 \cdot F)$ time.
\end{enumerate}
\end{theorem}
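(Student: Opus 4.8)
The plan is to establish all three claims by reducing DB-RPNI to the classical RPNI correctness and complexity theory, with the reduction carrying the burden of showing that the DBMM's split between non-transitioning $\alpha$-inputs and non-outputting $\beta$-inputs preserves the invariants RPNI relies on. The conceptual bridge is a Myhill--Nerode style characterization of DBMM states: two $\beta$-input sequences $w, w'$ should drive the machine to the same state exactly when, for every $\beta$-continuation $z$ and every $\alpha$-input $i_\alpha$, the output $\mathcal{G}$ read after $w\cdot z$ agrees with that read after $w'\cdot z$. I would first prove that the minimal resolvent DBMM is well-defined and unique up to isomorphism by showing this relation is a right congruence whose classes are exactly its states; this is what ``minimal'' refers to and is the target the algorithm must recover.

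For Part 1 (correctness and minimality), I would show that DB-RPNI's two operations faithfully implement this congruence. The soundness direction argues that a merge is only committed when $\mathrm{Compatible}$ holds on all implied pairs produced by folding; since folding computes the transitive closure of forced merges and local compatibility checks equality of $\mathcal{G}$ on the shared $\alpha$-inputs, no two PTT states that the data witnesses as distinguishable are ever conflated, so the returned machine reproduces every sample and is therefore resolvent. The completeness/minimality direction is where structure completeness enters: I would show that a structure-complete sample contains, for each pair of non-equivalent target states, an $\alpha$-input and $\beta$-continuation witnessing their difference, and simultaneously enough agreeing evidence that every pair of equivalent PTT states is merged. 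A standard RPNI induction on the red-blue processing order then shows the red set is in bijection with the congruence classes, so the output has exactly $|U|$ states. I expect this to be the main obstacle, because the dual-input structure means a single alternating input symbol does not both transition and emit, so the usual argument that ``compatible prefixes extend to compatible behaviors'' must be re-established separately for the transition dimension (driven by $\beta$) and the output dimension (probed by $\alpha$), and the definition of structure completeness must be verified to supply witnesses along both dimensions.

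For Parts 2 and 3 (complexity), I would bound the total work as (number of merge attempts) $\times$ (cost per fold). A single fold touches each absorbed PTT state once and performs an $O(F)$ local-compatibility test per state pair, so one fold costs $O(T \cdot F)$. In the general case, the red set and the blue set are each bounded only by $T$, giving $O(T^2)$ attempted merges, each possibly a full fold, for a total of $O(T^3 \cdot F)$. Under structure completeness the bound tightens along two axes: the final red set has exactly $|U|$ states, and each red state contributes at most $|L| = |\mathcal{I}_\beta|$ blue successors, so only $O(|U|\cdot|L|)$ blue states are ever processed. The remaining subtlety, which I would treat carefully, is that each blue state may still be tested against several reds; here I would invoke that a structure-complete sample makes an incompatible candidate fail already at the local compatibility check in $O(F)$ time rather than after a deep fold, so the cost of resolving a blue state is dominated by its single successful fold of $O(T \cdot F)$. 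Summing over the $O(|U|\cdot|L|)$ blue states, and noting the cheap failed checks are dominated because $|U| \le T$, yields $O(|U| \cdot |L| \cdot T \cdot F)$.
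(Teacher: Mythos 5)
Your proposal follows essentially the same route as the paper's proof: part 1 is reduced to RPNI-style correctness (structure completeness prevents bad merges and guarantees the merges needed to reach the target), and parts 2--3 are obtained by bounding (number of merge attempts) $\times$ (cost of one folding/compatibility check), giving $O(|U|\cdot|L|)$ processed blue states in the structure-complete case versus $O(T^2)$ attempts in general, each costing $O(T\cdot F)$. Your treatment is in fact more careful than the paper's at one point --- you explicitly justify why the up-to-$|U|$ failed merge tests per blue state do not inflate the structure-complete bound (Conflict Convergence makes a wrong candidate fail already at the $O(F)$ local check), whereas the paper dismisses this with ``in the best case where no incorrect merges are considered.''
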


\begin{proof}
The full proof is provided in Appendix~\ref{app:proof_details}.
\end{proof}

Notably, Structure Completeness is a sufficient but not necessary condition for success. In practice, the algorithm only requires the sample set to contain enough evidence to prevent incorrect state merges—a much weaker condition than full Conflict Convergence that is more easily met. 

\subsection{Observation Supplement}
Non-Markovian transition dynamics can interfere with RM inference, causing RMs to have redundant states.

\begin{example}[Transition-Reward Interference]
Consider a setting as shown in Figure~\ref{fig:StateSupp}. The agent attempting to move from $corridor$ to the $cyanroom$ will succeed only if it has previously visited room with $key$; otherwise, it remains in $corridor$. In contrast to our previous setting, the agent gets reward once it achieves $cyanroom$. In this case, the same observation-action pair can yield different rewards depending on history, which might seem like non-Markovian reward behavior. However, the root cause is non-Markovian transition behavior. Such interference introduces unnecessary complexity into reward machines.
\end{example}

\begin{figure}[htbp]
    \centering
    \includegraphics[width=0.8\columnwidth]{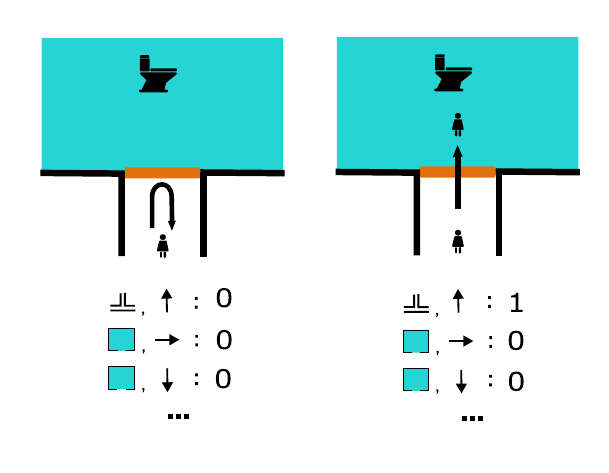} 
    \caption{Non-Markovian transitions interfering with reward machine inference.}
    \label{fig:StateSupp}
\end{figure}

To address this interference, we augment observations with TM states before RM inference. For each step in a trace, the current observation $o_i$ is paired with the TM state $q_i$ that was reached before observing $o_i$. This creates an augmented observation $o'_i = (o_i, q_i)$ that disambiguates the history. The pseudocode for this process is in Appendix~\ref{app:statesupp_pseudo}.

\begin{proposition}[Observation Supplement Effect]
If $\mathcal{TM}$ is resolvent to the Det-POMDP, then observation supplement eliminates transition-related non-Markovianity.
\end{proposition}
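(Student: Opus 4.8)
The plan is first to pin down a precise meaning for ``eliminates transition-related non-Markovianity,'' since the statement is phrased informally. I would work in the augmented observation space $O' = O \times Q$ and declare that transition non-Markovianity is eliminated exactly when the successor augmented observation is a well-defined deterministic function of the current augmented observation and the action taken; equivalently, any two feasible histories $h_1, h_2$ that reach the same augmented observation $(o, q)$ must, under any action $a$, produce the same successor augmented observation. This is the natural formalization, because non-Markovianity in the raw space is precisely the failure of $(o, a)$ to determine the next observation, and the claim is that appending the TM state $q$ repairs this.

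The core of the proof is a direct application of the TM resolvent property. Let $h_1$ and $h_2$ be feasible histories both reaching augmented observation $(o, q)$, with underlying Det-POMDP states $s_1$ and $s_2$ respectively (these may differ, which is exactly the source of non-Markovianity). Since the label sequence of each history drives the TM to the common state $q = \delta_Q^*(q_0, \bar{l})$, resolvence applied to each history gives, for every $a \in A$,
\[
\delta_P(q, o, a) = Z(P(s_1, a)) \quad\text{and}\quad \delta_P(q, o, a) = Z(P(s_2, a)),
\]
so $Z(P(s_1, a)) = Z(P(s_2, a)) =: o'$. Hence the successor observation is identical for the two histories and is computed as $o' = \delta_P(q, o, a)$, a function of $(o, q, a)$ alone. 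The successor TM state is then $q' = \delta_Q(q, L(o'))$, which depends only on $q$ and $o'$ and is therefore also a function of $(o, q, a)$. Consequently the successor augmented observation $(o', q')$ is a deterministic function of $((o, q), a)$, which is exactly the Markov property in the augmented space.

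I would then stress that this argument exhibits resolvence as the indispensable lever: if the TM were not resolvent, $\delta_P(q, o, a)$ could disagree with $Z(P(s_t, a))$ for some history, and two histories sharing $(o, q)$ could split into distinct successors, so non-Markovianity would survive. The main obstacle I anticipate is not the algebra but the bookkeeping around indexing and well-definedness: I must align the TM state ``reached before observing $o_i$'' used by the supplement procedure with the state $\delta_Q^*(q_0, \bar{l})$ appearing in the resolvent definition, and verify that the successor TM-state update is single-valued across any pair of histories reaching $(o,q)$ so that $(o',q')$ is genuinely determined. Once the indexing convention is fixed and shown to coincide with the resolvent convention, the remaining steps are immediate, and determinism of $\delta_Q$ and $L$ closes the argument.
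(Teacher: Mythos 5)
Your proposal is correct and follows essentially the same route as the paper's proof: both formalize the claim as determinism of the augmented-observation transition, invoke resolvence to get $o_{t+1} = \delta_T(q_t, o_t, a_t)$, and then note that the TM-state update is deterministic, so $(o_{t+1}, q_{t+1})$ is a function of $((o_t,q_t),a_t)$. The only cosmetic differences are your two-histories phrasing and your update convention $q' = \delta_Q(q, L(o'))$ versus the paper's $q_{t+1} = \delta_Q(q_t, L(o_t))$ (an indexing offset already present between the paper's resolvent definition and its supplement algorithm), neither of which affects the argument.
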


By decoupling transition-based non-Markovianity, we can infer more compact and interpretable RMs. These smaller automata reduce the inference complexity and improve the efficiency of subsequent policy learning.

\section{Experiments}

In this section\footnote{The source code for this paper is publicly available on GitHub at \url{https://github.com/sousoura/Inferring-Reward-Machines-and-Transition-Machines-from-POMDP.git}.}, we conduct two experiments to answer the following key research questions:
\begin{itemize}
    \item[\textbf{Q1:}] Is our method significantly more efficient at inferring automata from traces compared to SOTA baseline methods?
    \item[\textbf{Q2:}] How does each component of our pipeline contribute to inference efficiency and the quality of the automata?
    \item[\textbf{Q3:}] Can our method still infer relatively small automata in large environments?
\end{itemize}

\subsection{Efficiency Comparison with Baselines (Q1)}

\begin{figure}[htbp]
    \centering
    \includegraphics[width=0.52\textwidth]{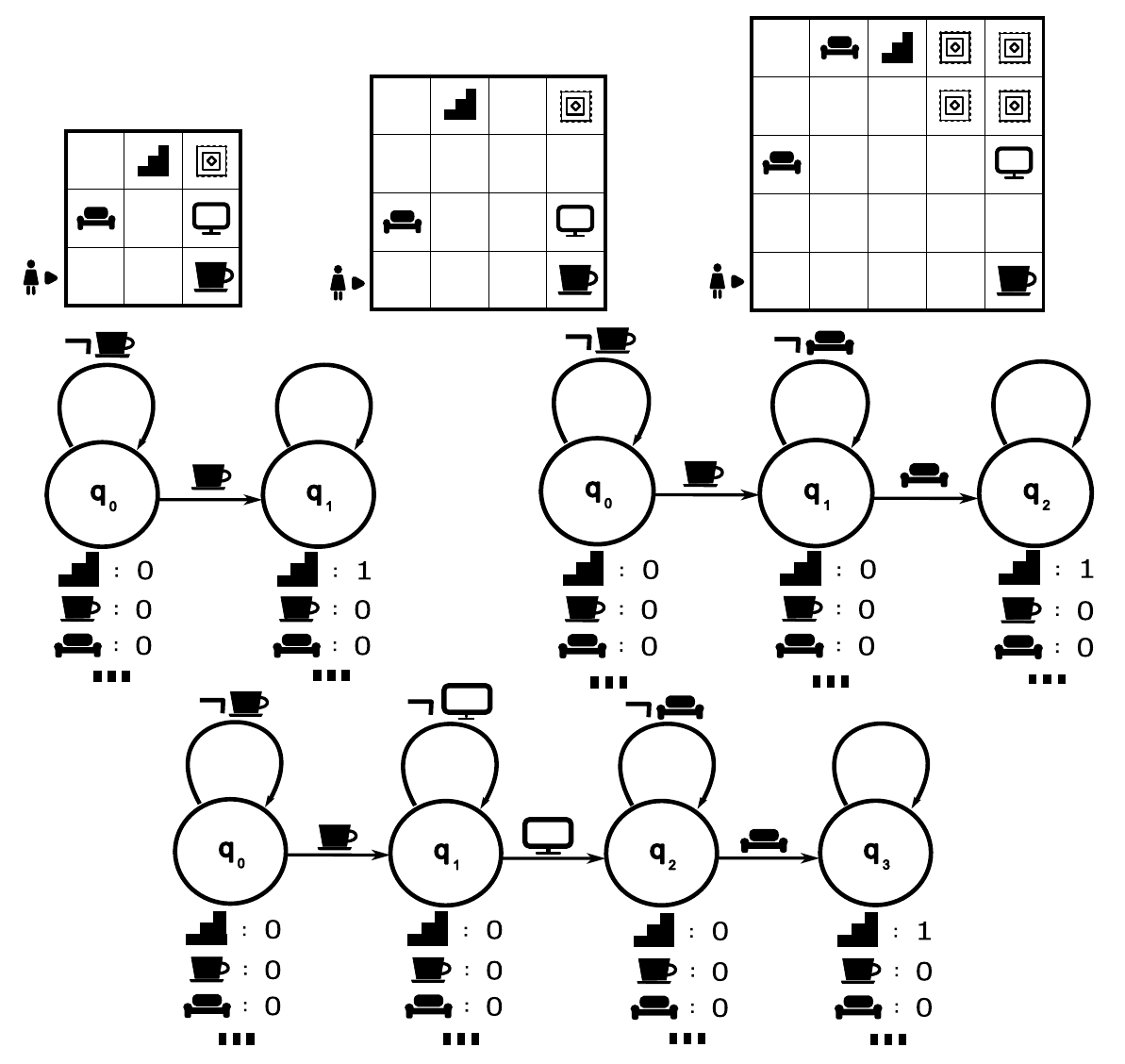} 
    \caption{The environments used in the baseline comparison.}
    \label{fig:grid}
\end{figure}

\paragraph{Baselines.}
We compare our approach against two state-of-the-art methods for inferring automata from traces:
\begin{itemize}
    \item \textbf{HMM-based approach}~\cite{abate2023learning}: A method based on Hidden Markov Models to infer DFAs.
    \item \textbf{ILP-based approach}~\cite{hyde2024detecting}: An Integer Linear Programming method that infers RMs.
\end{itemize}
We did not compare against the method in~\cite{toro2019learning}, as its method is local search-based and its runtime and quality are only dependent on hyperparameter tuning.

\paragraph{Experimental Setup.}
To answer Q1, we evaluate inference efficiency on three discrete grid environments of increasing complexity: 3×3, 4×4, and 5×5 grids with 3, 4, and 5 task phases respectively (see Figure~\ref{fig:grid}). As shown in the automata structures, agents must reach the final automaton state and perform a specific action to complete the task and receive rewards. This task structure is chosen because the HMM baseline can only handle reward patterns of this form. Further details regarding the datasets, hardware, and software can be found in Appendix~\ref{app:implementation_details}.


 \begin{table*}[htbp]
    \centering
    \small
    \caption{Ablation study on a 25$\times$25 grid environment.}
    \label{tab:ablation_study}
    \begin{tabular}{lcccccc}
        \toprule
        \multicolumn{1}{c}{\textbf{Configuration}} & \multicolumn{3}{c}{\textbf{Low Data (1,000 traces)}} & \multicolumn{3}{c}{\textbf{High Data (10,000 traces)}} \\
        \cmidrule(lr){2-4} \cmidrule(lr){5-7}
         & \textbf{TM States} & \textbf{RM States} & \textbf{Runtime (s)} & \textbf{TM States} & \textbf{RM States} & \textbf{Runtime (s)} \\
        \midrule
        Full Pipeline                        & 7 & 2 & 412.2     & 7 & 2 & 4,174.6   \\
        w/o Observation Supplement                 & 7          & 218        & 26,976.9           & 7          & 15         & 4,252.5            \\
        w/o Redundant $\alpha$-Input Removal & 7          & 2          & 3,176.2            & 7          & 2          & 30,395.8           \\
        w/o Trivial $\beta$-Input Removal    & $>$250       & --         & $>$172,800           & 8          & 2          & 65,728.3           \\
        w/o Both Removal Rules               & --         & --         & --                 & --         & --         & $>$172,800           \\
        \bottomrule
    \end{tabular}
\end{table*}

\paragraph{Results and Analysis.}
Figure~\ref{fig:Baseline} presents the comparative performance. Our approach demonstrates substantial efficiency advantages. In the 3×3 environment, our method requires only 1.3 seconds compared to 104.0 seconds for the HMM approach and 5.3 seconds for the ILP method. This efficiency gap widens in larger environments: in the 4×4 setting, our method takes 3.9 seconds, while the HMM approach requires over 6,000 seconds and the ILP method needs approximately 5,500 seconds. For the 5×5 environment, both baseline methods time out, while our approach infers the correct automaton in 56.1 seconds.

\begin{figure}[htbp]
    \centering
    \includegraphics[width=0.4\textwidth]{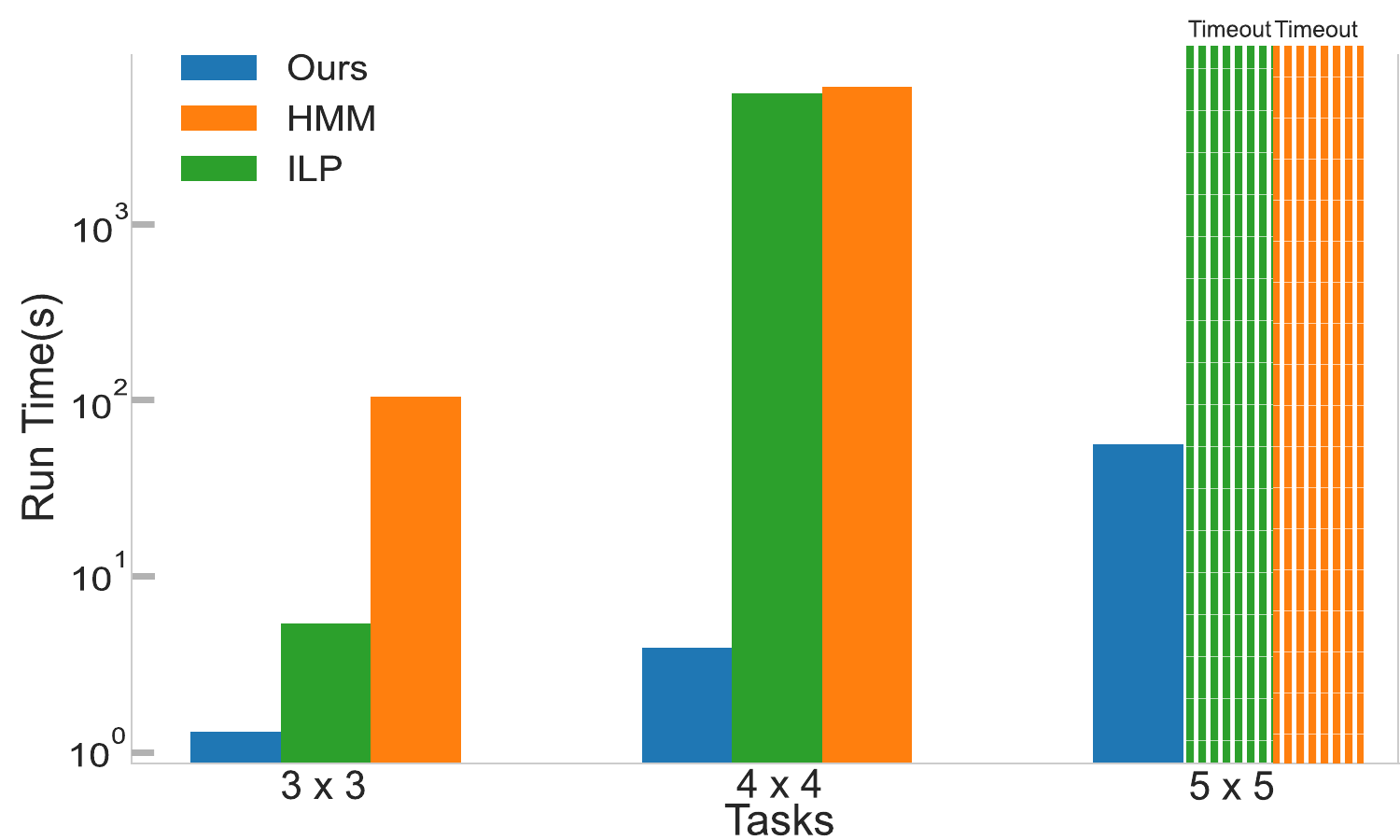} 
    \caption{Baseline Comparison.}
    \label{fig:Baseline}
\end{figure}

\subsection{Ablation and Scalability Analysis (Q2 \& Q3)}

\paragraph{Configurations.}
We test configurations to isolate each component's effect. We compare our \textbf{Full Pipeline} with ablated versions that run without \textbf{Observation Supplement}, without \textbf{Redundant $\alpha$-Input Removal}, without \textbf{Trivial $\beta$-Input Removal}, and without \textbf{Both Removal Rules}.

\paragraph{Experimental Setup.}
We conduct this study on a large-scale, randomly generated 25$\times$25 grid environment designed to test scalability. In this environment, an underlying TM controls the passability of certain locations, while a RM dictates the rewards, creating complex, non-Markovian dynamics. To analyze the impact of data availability, we test our method under two conditions: a ``Low Data`` and a ``High Data`` setting. All specific generation parameters and dataset statistics are detailed in Appendix~\ref{app:ablation_env_details}.


\paragraph{Results and Analysis.}
 The results reveal that our optimizations are crucial, with their importance becoming even more pronounced when data is scarce. Removing Observation supplement increases the inferred RM complexity fivefold with high data (from 3 to 15 states), and this effect is drastically exacerbated in the low-data setting, where the state count explodes to 218. Similarly, the preprocessing steps are vital for efficiency. Removing Redundant $\alpha$-Input Removal or Trivial $\beta$-Input Removal results in significant 7x and 15x slowdowns, respectively, with high data. The latter's removal even causes the process to fail entirely with low data. In contrast, our full pipeline demonstrates strong scalability and robustness.
 This success in a complex 25x25 environment, where baselines from Experiment 1 failed on smaller grids, confirms the method's practical applicability.

\subsection{Validation on Downstream Policy Learning}
To ultimately validate the practical utility of our approach, we integrated the automata inferred by our full pipeline (7 TM states and 2 RM states) with a standard Q-learning agent to solve the control task. The agent's state was defined as the augmented tuple $(o, u, q)$, where $o$ is the raw observation from the environment, and $u$ and $q$ are the current states of the inferred RM and TM, respectively. When trained in the 25$\times$25 partially observable environment, the agent converged to the optimal policy. This final result demonstrates that our method produces automata that effectively restore the Markov property, enabling standard RL algorithms to solve complex, non-Markovian decision-making problems.

\section{Conclusion}

This paper introduces Transition Machines (TMs) to disentangle transition-based non-Markovianity from reward dependencies in POMDPs. We then propose the Dual Behavior Mealy Machine (DBMM), a unified framework for representing both TMs and Reward Machines, along with an efficient inference algorithm to learn them from traces. Our approach restores the Markov property to the environment, enabling standard RL algorithms to solve complex tasks.

Despite these contributions, our work has limitations that suggest clear avenues for future research. First, our method is currently restricted to deterministic POMDPs. We view this focus as a crucial foundational step, as successfully decoupling dependencies in the deterministic setting lays the groundwork for future extensions to stochastic environments, for instance, by integrating probabilistic automata. Second, the algorithm's performance relies on high-quality trace data. Future work on methods with less stringent data assumptions, such as inferring from incomplete data, is crucial for making these automaton-based approaches practical for real-world applications.

\bibliography{aaai2026}

\newpage

\begin{center}
    {\LARGE \textbf{Appendix}}
\end{center}
\addcontentsline{toc}{section}{Appendix}

\section{Detailed Algorithms}
\label{app:alg_details}

This appendix contains supplementary algorithmic details, including procedures for preprocessing and pseudocode for the DB-RPNI and Observation Supplement algorithms.

\subsection{Redundant $\alpha$-Input Removal}
\label{app:preprocessing}
\begin{description}[leftmargin=0pt,labelindent=0pt,font=\normalfont\itshape]
\item[\textbf{Preprocessing Operation:}]
For each redundant $i_\alpha$ with constant output $o^*$:
\begin{enumerate}[leftmargin=*,topsep=0.5ex]
    \item Create reduced sample set $\mathcal{S}'$ by removing all occurrences of $(i_\alpha, o^*)$ from each sample $(w, o) \in \mathcal{S}$.
    \item Record the mapping $\mathcal{R}(i_\alpha) = o^*$.
\end{enumerate}

\item[\textbf{Recovery:}] 
After inferring the DBMM from $\mathcal{S}'$, construct the complete DBMM by setting $\mathcal{G}(q, i_\alpha) = \mathcal{R}(i_\alpha)$ for all $q \in \mathcal{V}$ and all recorded $i_\alpha$.
\end{description}

\subsection{Trivial $\beta$-Input Removal}
\begin{description}[leftmargin=0pt,labelindent=0pt,font=\normalfont\itshape]
\item[\textbf{Preprocessing Operation:}]
Given a set of known trivial inputs $\mathcal{I'_\beta} \subseteq \mathcal{I_\beta}$:
\begin{enumerate}[leftmargin=*,topsep=0.5ex]
    \item Create reduced sample set $\mathcal{S}'$ where for each $(w, o) \in \mathcal{S}$, remove all $i_\beta \in \mathcal{I'_\beta}$ from $w$ and their corresponding outputs from $o$.
\end{enumerate}

\item[\textbf{Recovery:}] 
After inferring DBMM $\mathcal{M}'$ from $\mathcal{S}'$, add self-loop transitions $\mathcal{T}(q, i_\beta) = q$ for all $q \in \mathcal{V}$ and all $i_\beta \in \mathcal{I'_\beta}$.
\end{description}

\subsection{DB-RPNI Pseudocode}
\label{app:dbrpni_pseudo}

The DB-RPNI algorithm consists of two main phases: building a Prefix Tree Transducer (PTT) from the sample data, and then iteratively merging states to generalize the automaton.

\begin{algorithm}[H]
\caption{BuildPTT}
\begin{algorithmic}[1]
\REQUIRE{Sample set $\mathcal{S}$, $\mathcal{\mathcal{I_\alpha}}, \mathcal{\mathcal{I_\beta}}, \mathcal{O}$}
\ENSURE{PTT $M$}

\STATE $M \leftarrow \langle \mathcal{V}, q_0, \mathcal{\mathcal{I_\alpha}}, \mathcal{\mathcal{I_\beta}}, \mathcal{O}, \mathcal{T}, \mathcal{G} \rangle$; $\mathcal{V} \leftarrow \{q_0\}$     
\FOR{each sample $(x, y)$ in $\mathcal{S}$}
    \STATE $q \leftarrow q_0$
    \FOR{each $x_{i}, y_{i}$ in $x, y$}
        \IF{$x_i \in I_\alpha$}
            \STATE $\mathcal{G}(q, x_i) \leftarrow y_i$
        \ELSIF{$x_i \in I_\beta$}
            \IF{$\mathcal{T}(q, x_i)$ is undefined}
                \STATE Create $q'$ as new state
                \STATE $\mathcal{V} \leftarrow \mathcal{V} \cup \{q'\}$; $\mathcal{T}(q, x_i) \leftarrow q'$
            \ENDIF
            \STATE $q \leftarrow \mathcal{T}(q, x_i)$
        \ENDIF
    \ENDFOR
\ENDFOR

\RETURN{$M$}
\end{algorithmic}
\end{algorithm}

\begin{algorithm}[H]
\caption{StateMerging}
\begin{algorithmic}[1]
\REQUIRE PTT $\mathcal{M} = \langle \mathcal{V}, q_0, \mathcal{I_\alpha}, \mathcal{I_\beta}, \mathcal{O}, \mathcal{T}, \mathcal{G} \rangle$
\ENSURE Minimal DBMM $\mathcal{M}'$
\STATE $Red \gets \{q_0\}$
\STATE $Blue \gets \{q \in \mathcal{V} : \exists r \in Red, l \in \mathcal{I_\beta}, \mathcal{T}(r, l) = q\}$
\WHILE{$Blue \neq \emptyset$}
    \STATE Select $q_b \in Blue$
    \STATE $merged \gets$ false
    \FOR{each $q_r \in Red$}
        \IF{$\textsc{Mergeable}(q_r, q_b, \mathcal{M})$}
            \STATE $\mathcal{M} \gets \textsc{Merge}(q_r, q_b, \mathcal{M})$
            \STATE $merged \gets$ true
            \STATE \textbf{break}
        \ENDIF
    \ENDFOR
    \IF{$not\ merged$}
        \STATE $Red \gets Red \cup \{q_b\}$
    \ENDIF
    \STATE $Blue \gets \{q \in \mathcal{V} : \exists r \in Red, l \in \mathcal{I_\beta}, \mathcal{T}(r, l) = q \wedge q \notin Red\}$
\ENDWHILE
\RETURN $\mathcal{M}$
\end{algorithmic}
\end{algorithm}

As noted in the main text, the \textsc{Mergeable} check and the \textsc{Merge} operation are implemented using the folding operation from \cite{de2010grammatical}.

\subsection{Observation Supplement Algorithm}
\label{app:statesupp_pseudo}
The observation supplement algorithm augments observation-based traces with the state information from a inferred Transition Machine (TM) to resolve transition-based non-Markovianity before inferring a Reward Machine (RM).

\begin{algorithm}[H]
\caption{Observation Supplement}
\begin{algorithmic}[1]
\REQUIRE Inferred TM $\mathcal{TM} = \langle Q, q_0, \delta_Q, \delta_T \rangle$, \\
Trajectory $\zeta = \langle (\ell_0, o_0, a_0, r_0), (\ell_1, o_1, a_1, r_1), \ldots \rangle$
\ENSURE Augmented trajectory $\zeta'$

\STATE $q \gets q_0$
\STATE $\zeta' \gets \langle \rangle$
\FOR{each $(\ell_i, o_i, a_i, r_i)$ in $\zeta$}
    \STATE $o'_i \gets (o_i, q)$
    \STATE $\zeta' \gets \zeta' \cdot (\ell_i, o'_i, a_i, r_i)$
    \STATE $q \gets \delta_Q(q, \ell_i)$
\ENDFOR
\RETURN $\zeta'$
\end{algorithmic}
\end{algorithm}

\section{Details for Theoretical Guarantees}
\label{app:guarantees}

This appendix provides the formal definition of structure completeness and the detailed proof for the theorem presented in Section 5.2.4.

\subsection{Formal Definition of Structure Completeness}
\label{app:def_structure_completeness}

\begin{definition}[Structure Completeness]
When the underlying DBMM is $\langle \mathcal{V}, q_0, \mathcal{I_\alpha}, \mathcal{I_\beta}, \mathcal{O}, \mathcal{T}, \mathcal{G} \rangle$ and sample set $\mathcal{S} \subseteq ((\mathcal{I_\alpha} \cup \mathcal{I_\beta})^* \times \mathcal{O}^*)$ consists of input-output sequence pairs. For any input sequence $w$ and state $q$, let $\mathcal{T}^*(q, w)$ denote the state reached from $q$ by processing only the $\beta$-inputs in $w$. Two input sequences $u, v$ conflict given $\mathcal{S}$, denoted $\mathrm{Conflict}_\mathcal{S}(u, v)$, if the sample set contains evidence that the states reached by $u$ and $v$ produce different outputs for a same $\alpha$-input. We say $\mathcal{S}$ is \emph{structure complete} if and only if it satisfies the following three conditions:

\begin{itemize}
    \item \textbf{State Coverage:} $\forall q \in \mathcal{V}$, $\exists (w, o) \in \mathcal{S}$, $\exists i$: $\mathcal{T}^*(q_0, w[1:i]) = q$

    \item \textbf{Transition Coverage:} $\forall q_1, q_2 \in \mathcal{V}$, $\forall l \in \mathcal{I_\beta}$: $\mathcal{T}(q_1, l) = q_2 \rightarrow$ $\exists (w, o) \in \mathcal{S}$, $\exists i$: $w[i] = l \wedge \mathcal{T}^*(q_0, w[1:i-1]) = q_1$

    \item \textbf{Conflict Convergence:} $\forall (w_1, o_1), (w_2, o_2) \in \mathcal{S}$, $\forall i, j$: $\mathcal{T}^*(q_0, w_1[1:i]) \neq \mathcal{T}^*(q_0, w_2[1:j]) \rightarrow \mathrm{Conflict}_\mathcal{S}(w_1[1:i], w_2[1:j])$
\end{itemize}
\end{definition}

\subsection{Proof of Theorem 5.2.4}
\label{app:proof_details}

\begin{proof}[Proof of Theorem]
(1) follows from the RPNI correctness theorem~\cite{oncina1992inferring} applied to DBMMs. The structure completeness condition ensures that the sample set contains enough information to distinguish any two non-equivalent states and cover all necessary transitions, preventing incorrect merges and guaranteeing that the final automaton is isomorphic to the target minimal automaton.

(2) With structure completeness, the red-blue framework ensures at most $|U| \cdot |L|$ merge attempts are promoted to red states, where $|U|$ is the number of states in the target automaton. Each merge attempt involves checking compatibility, which is handled by the folding operation. The folding operation traverses the state pairs to be merged. In the best case where no incorrect merges are considered, this process runs on the order of the size of the Prefix Tree Transducer (PTT), $T$. Each compatibility check examines at most $F$ input-output pairs. Thus, the complexity is $O(|U| \cdot |L| \cdot T \cdot F)$.

(3) In the general case, without the guarantee of structure completeness, the algorithm may attempt to merge any pair of states from the PTT. The number of possible merge attempts in the red-blue framework is bounded by $O(T^2)$. Each merge requires $O(T \cdot F)$ operations for compatibility checking and folding, leading to a worst-case complexity of $O(T^3 \cdot F)$.
\end{proof}

\subsection{Proof of Proposition on Observation Supplement Effect}
\label{app:proof_observation_supplement}

\begin{proposition}[Observation Supplement Effect]
If the Transition Machine $\mathcal{TM}$ is resolvent with respect to the Det-POMDP $\mathcal{P}$, then applying the observation supplement procedure eliminates transition-related non-Markovianity with respect to the augmented observations.
\end{proposition}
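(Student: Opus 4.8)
The plan is to first make precise what ``eliminating transition-related non-Markovianity with respect to the augmented observations'' should mean, and then to reduce the claim to a one-line invocation of the resolvent property. I would formalize the target as follows: after observation supplement, the next raw observation is a deterministic function of the current augmented observation and the chosen action. Equivalently, whenever two feasible histories $h_1$ and $h_2$ of $\mathcal{P}$ terminate with the \emph{same} augmented observation $(o, q)$ --- the same raw observation $o$ and the same TM state $q$ --- every action $a \in A$ yields the same next observation along both histories. Showing this equivalence relation refines the raw observation enough to make transitions Markovian is the whole content of the proposition.

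First I would fix notation and align the indexing conventions. Following the observation supplement procedure, the augmented observation recorded at the $t$-th step is $o'_t = (o_t, q_t)$, where $q_t = \delta_Q^*(q_0, (l_0, \ldots, l_{t-1}))$ is the TM state reached by reading the labels preceding $o_t$. I would adopt the convention that the TM state carried alongside $o_t$ is precisely the state $q$ at which the resolvent equation $\delta_T(q, o_t, a) = Z(T(s_t, a))$ is evaluated, which is the natural reading of the resolvent definition once the label-emission bookkeeping is made consistent with the algorithm. Let $s^1_t$ and $s^2_t$ denote the underlying Det-POMDP states at the ends of $h_1$ and $h_2$.

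The core step is then immediate. Suppose $h_1$ and $h_2$ end with the same augmented observation $(o, q)$, so $o = Z(s^1_t) = Z(s^2_t)$ and both histories reach TM state $q$. Since $\mathcal{TM}$ is resolvent to $\mathcal{P}$, applying its defining equation along each history gives, for every $a \in A$,
$$Z(T(s^1_t, a)) = \delta_T(q, o, a) = Z(T(s^2_t, a)).$$
The middle term depends only on the shared pair $(o, q)$ and the action $a$, so the two successor observations coincide; hence the next raw observation is a function of $(o'_t, a)$ alone. To confirm the full augmented process is Markovian, I would note that the successor TM state is likewise determined, as $q_{t+1} = \delta_Q(q_t, L(o_{t+1}))$ with $q_t$ read off from $o'_t$ and $o_{t+1}$ now pinned down; therefore the entire next augmented observation $(o_{t+1}, q_{t+1})$ is a deterministic function of $(o'_t, a)$.

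I expect the main obstacle to be definitional rather than computational. The delicate part is stating ``transition-related non-Markovianity with respect to the augmented observations'' rigorously and ensuring that the TM state stored in $o'_t$ by the algorithm is exactly the state appearing in the resolvent equation; once these conventions are reconciled, the mathematical work collapses to the single substitution displayed above. Thus the care in this proof goes almost entirely into the setup and the indexing, with no extended calculation required thereafter.
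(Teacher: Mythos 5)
Your proposal is correct and follows essentially the same route as the paper's proof: invoke resolvency to write the next raw observation as $\delta_T(q, o, a)$, a function of the augmented observation and action alone, and then note that the next TM state is likewise determined. The only quibble is an off-by-one slip at the end: under your own convention ($q_t$ reads labels $l_0,\ldots,l_{t-1}$) the update is $q_{t+1} = \delta_Q(q_t, L(o_t))$, not $\delta_Q(q_t, L(o_{t+1}))$, though the conclusion that $q_{t+1}$ is determined by $(o'_t, a)$ survives either way.
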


\begin{proof}
Let the Det-POMDP be $\mathcal{P} = \langle S, A, s_0, P, R, \gamma, O, Z \rangle$ with a labeling function $L$. Let the resolvent TM be $\mathcal{TM} = \langle Q, q_0, \delta_Q, \delta_T \rangle$.

Transition-related non-Markovianity means that for a given observation-action pair $(o_t, a_t)$, the next observation $o_{t+1}$ is not uniquely determined, as it depends on the history of interactions. Our goal is to prove that after observation supplement, the transition to the next augmented observation is deterministic given the current augmented observation and action.

The observation supplement procedure creates an augmented observation $o'_t = (o_t, q_t)$ at each timestep $t$. Here, $o_t$ is the raw observation from the environment, and $q_t$ is the state of the TM. According to the Observation Supplement algorithm (Algorithm 5), the TM state $q_t$ is the result of the history of labels up to time $t-1$:
$$q_t = \delta_Q^*(q_0, \langle L(o_0), L(o_1), \ldots, L(o_{t-1}) \rangle)$$
where $\delta_Q^*$ is the extension of $\delta_Q$ to sequences.

We need to show that the next augmented observation, $o'_{t+1} = (o_{t+1}, q_{t+1})$, is uniquely determined by the current augmented observation-action pair, $(o'_t, a_t)$. We can prove this by showing that each component of $o'_{t+1}$ is uniquely determined.

\paragraph{1. Determinism of the next raw observation $o_{t+1}$.}
The next raw observation $o_{t+1}$ is determined by the underlying state transition in the Det-POMDP. Let $s_t$ be the (unobserved) state of the Det-POMDP at time $t$ that corresponds to the history leading to the observation $o_t$. By the definition of a Det-POMDP:
$$s_{t+1} = P(s_t, a_t)$$
$$o_{t+1} = Z(s_{t+1}) = Z(P(s_t, a_t))$$
The core assumption is that the TM is \textbf{resolvent}. By Definition 4.3, this means that for any feasible history leading to state $s_t$ and TM state $q_t$, the following holds for any action $a_t \in A$:
$$\delta_T(q_t, o_t, a_t) = Z(P(s_t, a_t))$$
By combining these two facts, we get:
$$o_{t+1} = \delta_T(q_t, o_t, a_t)$$
The current augmented observation $o'_t = (o_t, q_t)$ provides both $o_t$ and $q_t$. The action taken is $a_t$. Since the TM's transition-output function $\delta_T$ is a deterministic function, the next raw observation $o_{t+1}$ is uniquely determined by $(o_t, q_t, a_t)$, which is contained in the augmented observation-action pair $(o'_t, a_t)$.

\paragraph{2. Determinism of the next TM state $q_{t+1}$.}
The TM updates its state based on its current state and the current label. The next TM state, $q_{t+1}$, is calculated as:
$$q_{t+1} = \delta_Q(q_t, L(o_t))$$
The current augmented observation $o'_t = (o_t, q_t)$ provides both the current observation $o_t$ and the current TM state $q_t$. The labeling function $L$ is deterministic, and the TM's state-transition function $\delta_Q$ is also deterministic. Therefore, $q_{t+1}$ is uniquely determined by the components of $o'_t$.

\paragraph{Conclusion.}
Since both components of the next augmented observation $o'_{t+1} = (o_{t+1}, q_{t+1})$ are uniquely determined by the current augmented observation-action pair $(o'_t, a_t) = ((o_t, q_t), a_t)$, the transition function in the augmented observation space is deterministic. This means the augmented process satisfies the Markov property with respect to transitions, and the observation supplement procedure has successfully eliminated the transition-related non-Markovianity.
\end{proof}

\section{Experiment Details}
\label{app:implementation_details}

This section provides supplementary details of experiment in Section 7.

\subsection{Hardware and Software}
All experiments were conducted on a machine equipped with an Intel Core i7-9750H CPU @ 2.60GHz. Our proposed method is implemented in Python. For the baseline methods, the core components of the HMM-based approach are implemented in C++ and the ILP-based approach is implemented in Python.

\subsection{Dataset Details of Trace Generation in Baselines Comparison} All trace data was generated by a random agent.
\begin{itemize}
    \item For the 3$\times$3 environment, we used 275 traces with an average length of 26.7 steps per trace.
    \item For the 4$\times$4 environment, we used 500 traces with an average length of 124.4 steps per trace.
    \item For the 5$\times$5 environment, we used 6000 traces with an average length of 257.6 steps per trace.
\end{itemize}

\paragraph{Evaluation Metric.}
To compare the performance of our approach against the baselines, the primary metric used for this experiment is \textbf{inference time}, measured in seconds. This metric directly assesses the computational efficiency of each method in constructing an automaton from the provided traces.

\subsection{Ablation Study Environment Generation}
\label{app:ablation_env_details}

The 25$\times$25 grid environment used in the ablation study was randomly generated according to the following procedure. First, we defined a ground-truth Transition Machine with 7 states and a Reward Machine with 3 states, both with randomly generated topologies. We then placed 5 distinct labels at random odd-coordinate positions within the grid. These labels trigger state transitions in the TM and RM. 

The environment's rules were tied to these machines: the TM's state determines whether certain labeled positions are passable, and the RM's state dictates the reward value received at those locations. All such rules were also randomly generated.

The datasets were generated using a random agent. The ``Low Data`` setting consists of 1,000 traces, and the ``High Data`` setting consists of 10,000 traces. The average length of a trace across both datasets is 225.3 steps.

\paragraph{Evaluation Metrics.}
For the ablation study, we evaluate the performance of our method using two key metrics. \textbf{Inference time} is used to measure the algorithm's efficiency. The \textbf{number of states} in the inferred automaton is used to assess the compactness and simplicity of the learned model.

\paragraph{Transition Machine Generation.} The state of the TM determines which positions in the environment are impassable. The generation process ensures its structural soundness and complexity through the following steps:
\begin{itemize}
    \item \textbf{Strong Connectivity Guarantee:} To ensure that every state can reach any other state, we first construct an initial cycle that includes all states. We then add extra random transitions between states, including self-loops, with a predefined probability. This process guarantees that the automaton is \textbf{strongly connected}, preventing any isolated parts of the state space.
    
    \item \textbf{Determinism and Completeness:} The automaton is deterministic, meaning for any given state and label, there is at most one defined subsequent state. During generation, we ensure each state-label pair has a unique transition. If a transition for a label is not defined in a state, we assign a random target state, thus making the automaton \textbf{complete}.
    
    \item \textbf{Impassable Area Definition:} Each TM state is associated with a set of impassable grid positions. For each state, we randomly designate approximately 30\% of the labeled positions as impassable. To ensure the environment remains solvable, an additional check guarantees that \textbf{no single position is impassable across all states simultaneously}. This ensures that traversable paths always exist, regardless of the agent's internal state.
\end{itemize}

\paragraph{Reward Machine Generation.} The RM is designed to model tasks with sequential goals while preventing exploitation of infinite reward loops.
\begin{itemize}
    \item \textbf{Directed Acyclic Graph (DAG) Structure:} The core of the RM is a Directed Acyclic Graph. States are linearly ordered (e.g., $q'_0, q'_1, \dots, q'_n$), and initial transitions only permit movement from a lower-indexed state to a higher-indexed one. This establishes a clear forward progression towards a designated \textbf{terminal state}, which is the sole accepting state.
    
    \item \textbf{Ensuring Reachability to Terminal State:} After the initial DAG construction, we verify that at least one path exists from every non-terminal state to the terminal state. If a state is found to be a dead end, a direct transition to the terminal state is added.
    
    \item \textbf{Negative Reward Cycles:} To add complexity and penalize suboptimal behavior, we introduce \textbf{back edges} (transitions from higher-indexed to lower-indexed states). To prevent infinite reward exploits, these back edges are assigned a punitive negative reward. The magnitude of this negative reward is calculated to be greater than the maximum possible positive reward that can be accumulated in any path from the target of the back edge back to its source. This ensures all cycles yield a net negative reward.
    
    \item \textbf{Terminal Reward:} All transitions leading directly to the final terminal state are \textbf{guaranteed to have a positive reward}, providing a clear incentive for task completion.
\end{itemize}

\subsection{Policy Learning with Augmented State Representation}
\label{app:rl_framework_details}

To experimentally validate that the inferred Transition Machine and Reward Machine provide a sufficient state representation for policy learning, we conducted an experiment in the 25$\times$25 grid environment detailed in Appendix~\ref{app:ablation_env_details}.

\paragraph{Experimental Design.}
In this experiment, we put the agent into the 25$\times$25 grid environment, where the agent learns a policy over an augmented state space where each state $s'$ is a tuple:
$$s' = (o, q, u)$$
Here, $o \in O$ is the direct observation, $q \in Q$ is the current state of the inferred TM, and $u \in U$ is the current state of the inferred RM. At each time step, the TM and RM states are updated based on the label emitted at the agent's new location, thus equipping the agent with the necessary memory of historical context.

\paragraph{Procedure.}
We employed a standard Q-learning agent to learn the value function $Q(s', a)$. The agent was trained for 1500 episodes, interacting with the environment using an $\epsilon$-greedy exploration strategy. The key hyperparameters for the experiment were set as follows:
\begin{itemize}
    \item Learning Rate ($\alpha$): 0.1
    \item Discount Factor ($\gamma$): 0.95
    \item Initial Epsilon ($\epsilon$): 0.3, decaying by a factor of 0.995 after each episode to a minimum of 0.01.
\end{itemize}
At each step, the agent selected an action based on its current Q-table. The environment returned the next observation and a reward, and the agent's internal TM and RM models transitioned based on the observed label, producing the next augmented state $s'_{t+1}$. This experience tuple $(s'_t, a_t, r_t, s'_{t+1})$ was then used to update the Q-table.

\paragraph{Results and Conclusion.}
The agent's performance throughout the training process is shown in Figure~\ref{fig:learning_curve}. The learning curve, which plots the moving average of total reward per episode, shows a clear improvement.

\begin{figure}[htbp]
    \centering
    \includegraphics[width=1.0\columnwidth]{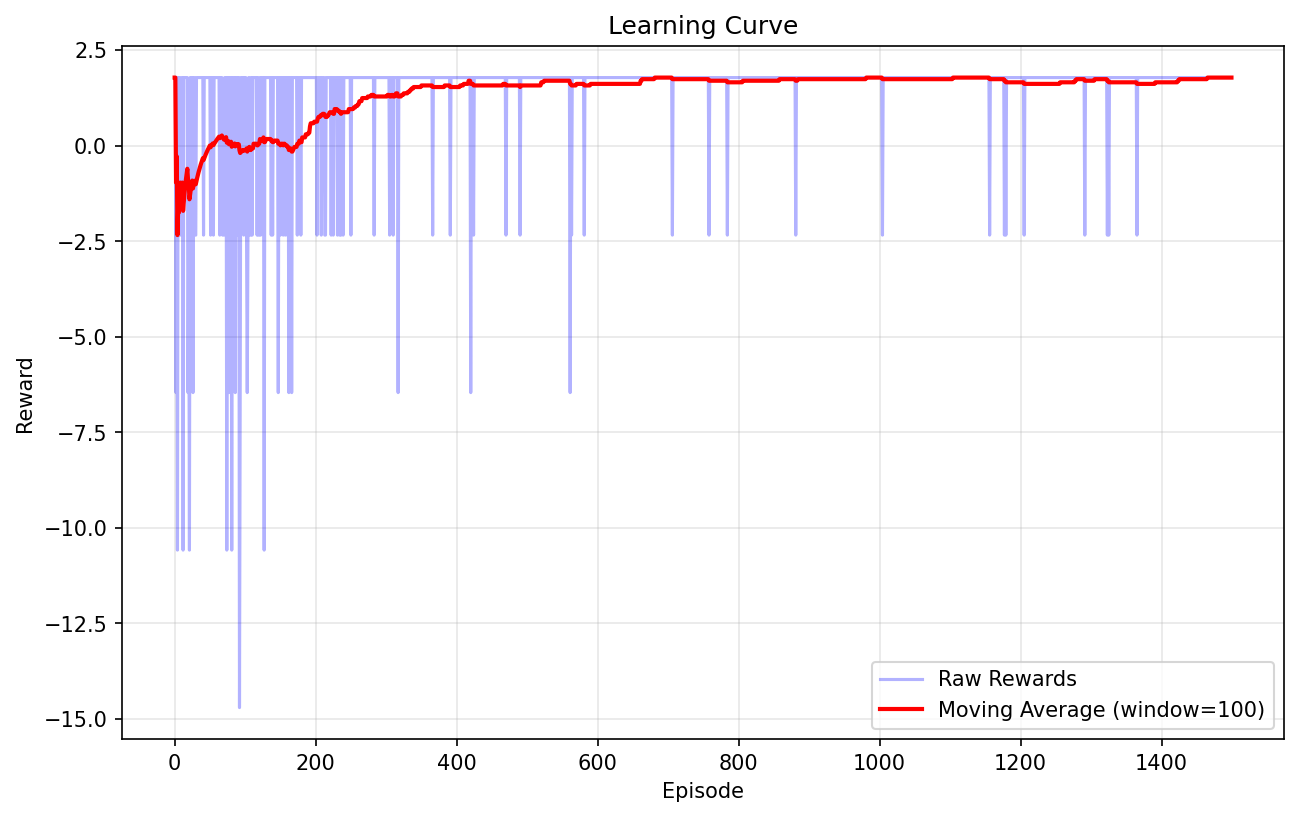} 
    \caption{The learning curve of the agent in the 25$\times$25 environment.}
    \label{fig:learning_curve}
\end{figure}

\end{document}